\newtheorem{theorem}{Theorem}
\newcommand\numberthis{\addtocounter{equation}{1}\tag{\theequation}}
\DeclareMathOperator*{\argmax}{arg\,max}
\begin{document}

\title{An efficient Exact-PGA algorithm for constant curvature manifolds}

\author{
Rudrasis Chakraborty\textsuperscript{1}, Dohyung Seo\textsuperscript{2} and Baba C. Vemuri\textsuperscript{1} \\
\textsuperscript{1}Department of CISE, University of Florida\\
\textsuperscript{2}U-Systems, A GE Healthcare Company\\
\textsuperscript{1}{\tt\small \{rudrasis, vemuri\}@cise.ufl.edu} ~~~~\textsuperscript{2}{\tt\small dhseo.118@gmail.com}
}

\date{}

\maketitle

\begin{abstract}
Manifold-valued datasets are widely encountered in
many computer vision tasks. A non-linear analog of
the PCA, called the Principal Geodesic Analysis (PGA)
suited for data lying on Riemannian manifolds was reported
in literature a decade ago. Since the objective
function in PGA is highly non-linear and hard to solve
efficiently in general, researchers have proposed a linear
approximation. Though this linear approximation
is easy to compute, it lacks accuracy especially when
the data exhibits a large variance. Recently, an alternative
called exact PGA was proposed which tries to
solve the optimization without any linearization. For
general Riemannian manifolds, though it gives better
accuracy than the original (linearized) PGA, for data
that exhibit large variance, the optimization is not computationally
efficient. In this paper, we propose an efficient
exact PGA for constant curvature Riemannian
manifolds (CCM-EPGA). CCM-EPGA differs significantly
from existing PGA algorithms in two aspects,
(i) the distance between a given manifold-valued data
point and the principal submanifold is computed analytically
and thus no optimization is required as in
existing methods. (ii) Unlike the existing PGA algorithms,
the descent into codimension-1 submanifolds
does not require any optimization but is accomplished
through the use of the Rimeannian inverse Exponential
map and the parallel transport operations. We present
theoretical and experimental results for constant curvature
Riemannian manifolds depicting favorable performance
of CCM-EPGA compared to existing PGA
algorithms. We also present data reconstruction from
principal components and directions which hasn’t been
presented in literature in this setting.

\end{abstract}

\section{Introduction}\label{sec1}

Principal Component Analysis (PCA) is a widely used dimensionality
reduction technique in Science and Engineering. PCA however requires
the input data to lie in a vector space. With the advent of new
technologies and wide spread use of sophisticated feature extraction
methods, manifold-valued data have become ubiquitous in many fields
including Computer Vision, Medical Imaging and Machine Learning. A
nonlinear version of PCA, called Principal Geodesic Analysis (PGA),
for data lying on Riemannian manifolds was introduced in
\cite{fletcher2004principal}.

Since the objective function of PGA is highly nonlinear and hard to
solve in general, researchers proposed a linearized version of PGA
\cite{fletcher2004principal}. Though this linearized PGA, hereafter
referred to as PGA, is computationally efficient, it lacks accuracy
for data with large spread/variance.  In order to solve the objective
function exactly, some researchers proposed to solve the original
objective function (not the approximation) and called it {\it exact
  PGA} \cite{sommer2010manifold}.  While {\it exact PGA} attempts to
solve this complex nonlinear optimization problem, it is however
computationally inefficient. Though it is not possible to efficiently
and accurately solve this optimization problem for a general manifold,
however, for manifolds with constant sectional curvature, we formulate
an efficient and exact PGA algorithm, dubbed CCM-EPGA. It is well
known in geometry, by virtue of Killing-Hopf theorem,
\cite{boothby1986introduction} that any non-zero constant curvature
manifold is isomorphic to either the hypersphere ($\mathbf{S}^N$) or
the hyperbolic space ($\mathbf{H}^N$), hence in this work, we present
the CCM-EPGA formulation for ($\mathbf{S}^N$) and
($\mathbf{H}^N$). Our formulation has several applications to Computer
Vision and Statistics including directional data
\cite{mardia1989shape} and color spaces
\cite{lenz2007hyperbolic}. Several other applications of hyperbolic
geometry are, shape analysis \cite{zeng2010ricci}, electrical
Impedence Tomography, Geoscience imaging \cite{uhlmann2013inverse},
brain morphometry \cite{wang2009teichmuller}, catadiaoptric vision
\cite{bogdanova2007scale} etc.

In order to depict the effectiveness of our proposed CCM-EPGA, we use
the {\it average projection error} as defined in
\cite{sommer2010manifold}. We also report the computational time
comparison of the CCM-EPGA with the PGA \cite{fletcher2004principal}
and the {\it exact PGA} \cite{sommer2010manifold}. Several variants of
PGA exist in literature and we briefly mention a few here.  In
\cite{said2007exact}, authors computed the principal geodesics
without approximation only for a special Lie group, $SO(3)$. Geodesic
PCA (GPCA) \cite{huckemann2006principal,huckemann2010} solves a
different optimization function namely, optimizing the projection
error along geodesics. The authors in \cite{huckemann2010} minimize
the projection error instead of maximizing variance in geodesic
subspaces (defined later in the paper). GPCA does not use a linear
approximation, but it is restricted to manifolds where a closed form
expression for the geodesics exists.  More recently, a probabilistic version
of PGA called PPGA was presented in \cite{zhang2013probabilistic},
which is a nonlinear version of PPCA
\cite{tipping1999probabilistic}. None of these methods attempt to
compute the solution to the exact PGA problem defined in
\cite{sommer2010manifold}.

The rest of the paper is organized as follows. In Section \ref{sec2},
we present the formulation of PGA. We also discuss the details of the
linearized version of PGA \cite{fletcher2004principal} and {\it exact
  PGA} \cite{sommer2010manifold}. Our formulation of CCM-EPGA is
presented in Section \ref{sec2}. We present experimental results of
CCM-EPGA algorithm along with comparisons to {\it exact PGA} and PGA
in Section \ref{sec3}. In addition to synthetic data experiments, we
present the comparative performance of CCM-EPGA on two real data
applications. In Section \ref{sec4}, we present the formulation for
reconstruction of data from principal directions and components in
this nonlinear setting. Finally, in section \ref{sec5}, we draw conclusions.

\section{Principal Geodesic Analysis}
\label{sec2}
Principal Component Analysis (PCA) \cite{jolliffe2002principal} is a
well known and widely used statistical method for dimensionality
reduction. Given a vector valued dataset, it returns a sequence of
linear subspaces that maximize the variance of the projected data. The
$k^{th}$ subspace is spanned by the principal vectors $\{\mathbf{v}_1,
\mathbf{v}_2, \cdots, \mathbf{v}_k\}$ which are mutually
orthogonal. PCA is well suited for vector-valued data sets but not for
manifold-valued inputs. A decade ago, the nonlinear version called the
Principal Geodesic Analysis (PGA) was developed to cope with
manifold-valued inputs \cite{fletcher2004principal}. In this section,
first, we briefly describe this PGA algorithm, then, we show the key
modification performed in \cite{sommer2010manifold} to arrive at what
they termed as the exact PGA algorithm. We then motivate and present
our approach which leads to an efficient and novel algorithm for exact
PGA on constant curvature manifolds (CCM-EPGA).

Let $M$ be a Riemannian manifold. Let's suppose we are given a
dataset, $X = \{x_1, \cdots, x_n\}$, where $x_j \in M$. Let us assume
that the finite sample Fr\'{e}chet mean \cite{frechet1948elements} of
the data set exists and be denoted by $\mu$. Let $V_k$ be the space
spanned by mutually orthogonal vectors (principal directions)
$\{\mathbf{v}_1, \cdots, \mathbf{v}_k\}$, $\mathbf{v}_j \in T_{\mu}M,
\forall j$. Let $S_k$ be the $k^{th}$ geodesic subspace of $T_{\mu}M$,
i.e., $S_k = Exp_{\mu}(V_k)$, where $Exp$ is the Riemannian
exponential map (see \cite{boothby1986introduction} for definition). Then, the
principal directions, $\mathbf{v}_i$ are defined recursively by
\begin{eqnarray}
\label{sec2:eq1}
\mathbf{v}_i &=& \displaystyle\argmax_{\|\mathbf{v}\|=1, \mathbf{v} \in V_{i-1}^{\bot}}\frac{1}{n}\displaystyle\sum_{j=1}^n d^2(\mu, \Pi_{S_i}(x_j)) \\
S_i &=& Exp_{\mu}(span{V_{i-1}, \mathbf{v}_i})
\end{eqnarray}
where $d(x,y)$ is the geodesic distance between $x \in M$ and $y \in
M$, $\Pi_{S}(x)$ is the point in $S$ closest to $x \in M$.

The PGA algorithm on $M$ is summarized in Alg. \ref{sec2:alg1}.

\begin{algorithm}
  \caption{The PGA algorithm on manifold $M$}
\label{sec2:alg1}

  \begin{algorithmic}[1]
    \State Given dataset $X = \{x_1, \cdots, x_n\} \in M$, and $1\leq L \leq dim(M)$
   \State Compute the FM, $\mu$, of $X$ \cite{afsari2011riemannian}
    \State Set $k \leftarrow 1$
    \State Set  $\{\bar{x}_1^0, \cdots, \bar{x}_n^0\}$ $\leftarrow$ $\{x_1, \cdots, x_n\}$ 
    \While{  $k \leq L$  } 
    \State Solve $\mathbf{v}_k = \displaystyle\argmax_{\|\mathbf{v}\|=1, \mathbf{v} \in T_{\mu}M, \mathbf{v} \in V_{k-1}^{\bot}}\frac{1}{n}\displaystyle\sum_{j=1}^n d^2(\mu, \Pi_{S_k}(x_j))$ as in Eq. \eqref{sec2:eq1}.
     \State Project $\{\bar{x}_1^{k-1}, \cdots, \bar{x}_n^{k-1}\}$ to a $k$ co-dimension one submanifold $Z$ of $M$, which is orthogonal to the current geodesic subspace.
    \State Set the projected points to $\{\bar{x}_1^k, \cdots, \bar{x}_n^k\}$ 
    \State $k \leftarrow k+1$ 
    \EndWhile  
  \end{algorithmic}
\end{algorithm}
\subsection{PGA and exact PGA}

In Alg. \ref{sec2:alg1} (lines $6-7$), as the projection operator
$\Pi$ is hard to compute, hence a common alternative is to locally
linearize the manifold. This approach \cite{fletcher2004principal}
maps all data points on the tangent space at $\mu$, and as tangent
plane is a vector space, one can use PCA to get the principal
directions. This simple scheme is an approximation of the PGA. This
motivated the researchers to ask the following question: \emph{Is it
  possible to do PGA, i.e., solve Eq. \eqref{sec2:eq1} without any
  linearization?} The answer is yes. But, computation of the
projection operator, $\Pi_{S}(x)$, i.e., the closest point of $x$ in
$S$ is computationally expensive.  In \cite{sommer2010manifold},
Sommer et al. give an alternative definition of PGA, i.e., they
minimize the average squared reconstruction error, i.e., $d^2(x_j,
\Pi_{S_i}(x_j))$ instead of $d^2(\mu, \Pi_{S_i}(x_j))$ in
eqns. \eqref{sec2:eq1}. They use an optimization scheme to compute
this projection. Further, they termed their algorithm, \emph{exact
  PGA}, as it does not require any linearization. {\it However, their
  optimization scheme is in general computationally expensive and for
  a data set with large variance, convergence is not
  guaranteed}. Hence, for large variance data, their \emph{exact PGA}
is still an approximation as it might not converge. This motivates us
to formulate an accurate and computationally efficient \emph{exact
  PGA}.
\subsection{Efficient and accurate exact PGA}

In this paper, we present an analytic expression for the projected
point and design an effective way to project data points on to the
co-dimension $k$ submanifold (as in \ref{sec2:alg1}, line $7$). An
analytic expression is in general not possible to derive for arbitrary
Riemannian manifolds. However, for constant curvature Riemannian
manifolds, i.e., $\mathbf{S}^N$ (positive constant curvature) and
$\mathbf{H}^N$ (negative constant curvature), we derive an analytic
expression for the projected point and devise an efficient algorithm
to project data points on to a co-dimension $k$ submanifold. Both
these manifolds are quite commonly encountered in Computer Vision
\cite{hartley2013rotation,lenz1998if,bogdanova2007scale,wang2009teichmuller,zeng2010ricci}
as well as many other fields of Science and Engineering. The former
more so than the latter.  Eventhough, there are applications that can
be naturally posed in hyperbolic spaces (e.g., color spaces in Vision
\cite{lenz2007hyperbolic}, catadiaoptric images
\cite{bogdanova2007scale} etc.), their full potential has not yet been
exploited in Computer Vision research as much as in the former case.

We first present some background material for the $N$-dimensional
spherical and hyperbolic manifolds and then derive an analytical
expression of the projected point.
\subsubsection{Basic Riemannian Geometry of $\mathbf{S}^N$} 
\begin{itemize}
\item \textbf{Geodesic distance}: The geodesic distance between $\psi,
  \bar{\psi} \in \mathbf{S}^N$ is given by, $d(\psi, \bar{\psi}) =
  \arccos(\psi^t\bar{\psi})$.
\item \textbf{Exponential Map}: Given a vector $\mathbf{v} \in
  T_{\psi}\mathbf{S}^N$, the Riemannian Exponential map on
  $\mathbf{S}^N$ is defined as $Exp_{\psi}(\mathbf{v}) =
  \cos(|\mathbf{v}|)\psi +
  \sin(|\mathbf{v}|)\mathbf{v}/|\mathbf{v}|$. The Exponential map
  gives the point which is located on the great circle along the
  direction defined by the tangent vector $\mathbf{v}$ at a distance
  $|\mathbf{v}|$ from $\psi$.
\item \textbf{Inverse Exponential Map}: The tangent vector $\mathbf{v}
  \in T_{\psi} \mathbf{S}^N$ directed from $\psi$ to $\bar{\psi}$ is
  given by, $Exp^{-1}_{\psi}(\bar{\psi}) = \frac{\theta}{\sin(\theta)}
  (\bar{\psi} - \psi\cos(\theta))$ where, $\theta = d(\psi,
  \bar{\psi})$. Note that, the inverse exponential map is defined
  everywhere on the hypersphere except at the antipodal points.
\end{itemize}
\subsubsection{Basic Riemannian Geometry of $\mathbf{H}^N$} 

The hyperbolic $N$-dimensional manifold can be embedded in
$\mathbf{R}^{N+1}$ using three different models. In this discussion,
we have used the hyperboloid model \cite{iversen1992hyperbolic}. In this model,
$\mathbf{H}^N$ is defined as $\mathbf{H}^N = \{\mathbf{x} = (x_1,
\cdots, x_{N+1})^t \in \mathbf{R}^{N+1} | <x,x>_H = -1, x_1 > 0\}$,
where the inner product on $\mathbf{H}^N$, denoted by $<x,y>_H$ is defined as $<x,y>_H = -x_1*y_1 + \sum_{i=2}^{N+1} (x_i*y_i)$.
\begin{itemize}
\item \textbf{Geodesic distance}: The geodesic distance between $\psi,
  \bar{\psi} \in \mathbf{H}^N$ is given by, $d(\psi, \bar{\psi}) =
  \cosh^{-1}(-<\psi,\bar{\psi}>_H)$.
\item \textbf{Exponential Map}: Given a vector $\mathbf{v} \in
  T_{\psi}\mathbf{H}^N$, the Riemannian Exponential map on
  $\mathbf{H}^N$ is defined as, $Exp_{\psi}(\mathbf{v}) =
  \cosh(|\mathbf{v}|)\psi +
  \sinh(|\mathbf{v}|)\mathbf{v}/|\mathbf{v}|$.
\item \textbf{Inverse Exponential Map}: The tangent vector $\mathbf{v}
  \in T_{\psi} \mathbf{H}^N$ directed from $\psi$ to $\bar{\psi}$ is
  given by $Exp^{-1}_{\psi}(\bar{\psi}) = \frac{\theta}{\sinh(\theta)}
  (\bar{\psi} - \psi\cosh(\theta))$ where, $\theta = d(\psi,
  \bar{\psi})$.
\end{itemize}

For the rest of this paper, we consider the underlying manifold, $M$,
as a constant curvature Riemannian manifold, i.e., $M$ is
diffeomorphic to either $\mathbf{S}^N$ or $\mathbf{H}^N$, where $N =
dim(M)$ \cite{boothby1986introduction}. Let $\psi, \bar{\psi} \in M$, $\mathbf{v} \in
T_{\bar{\psi}}M$. Further, let $y(\mathbf{v}, \psi)$ be defined as the
projection of $\psi$ on the geodesic submanifold defined by
$\bar{\psi}$ and $\mathbf{v}$. Now, we will derive a closed form
expression for $y(\mathbf{v}, \psi)$ in the case of $\mathbf{S}^N$ and
$\mathbf{H}^N$.

\begin{figure}[!ht]
\centering
   \includegraphics[width=0.8\linewidth]{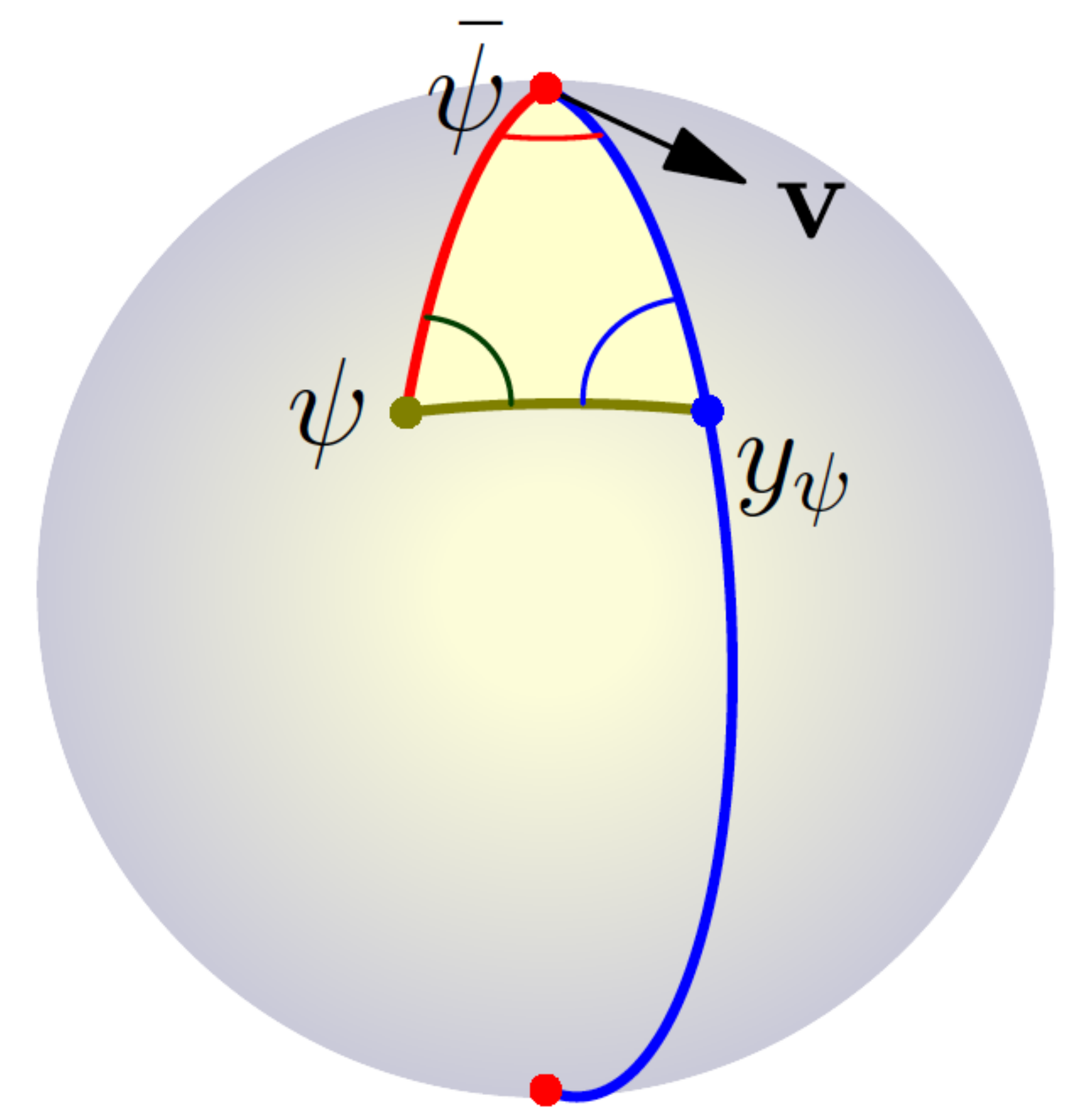}
\caption{Analytic expression for the projected point on the Sphere.}
\label{sec2:fig1}
\end{figure}

\subsection{Analytic expression for $y(\mathbf{v}, \psi)$ on $\mathbf{S}^N$}

\begin{theorem}
Let $\psi \in \mathbf{S}^N$ and $\mathbf{v} \in
T_{\bar{\psi}}\mathbf{S}^N$. Then the projection of $\psi$ on the
geodesic submanifold defined by $\bar{\psi}$ and $\mathbf{v}$, i.e.,
$y(\mathbf{v}, \psi)$ is given by:

\begin{align*}
y(\mathbf{v}, \psi) = \cos(\arctan \Bigg(\frac{(<\mathbf{v}, \psi>)/(<\psi, \bar{\psi}>)}{|\mathbf{v}|}\Bigg))\bar{\psi} \\ + \sin(\arctan \Bigg(\frac{(<\mathbf{v}, \psi>)/(<\psi, \bar{\psi}>)}{|\mathbf{v}|}\Bigg))\mathbf{v}/|\mathbf{v}| \numberthis 
\end{align*}

\end{theorem}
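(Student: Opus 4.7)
The plan is to exploit the fact that the one-dimensional geodesic submanifold through $\bar{\psi}$ in direction $\mathbf{v}$ is a great circle, so the projection of $\psi$ onto it reduces to a single-variable calculus problem on the circle. First I would invoke the Exponential map formula from the preliminaries to parametrize the submanifold as $\gamma(t) = \cos(t)\,\bar{\psi} + \sin(t)\,\mathbf{u}$, where $\mathbf{u} = \mathbf{v}/|\mathbf{v}|$. Since $\mathbf{v} \in T_{\bar{\psi}}\mathbf{S}^N$, the vectors $\bar{\psi}$ and $\mathbf{u}$ are orthonormal in the ambient $\mathbf{R}^{N+1}$, so $\gamma(t)$ stays on $\mathbf{S}^N$ and traces out precisely the geodesic in question.

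Next, I would rewrite the projection condition. By definition, $y(\mathbf{v},\psi) = \gamma(t^*)$ where $t^*$ minimizes $d(\gamma(t),\psi) = \arccos(\langle \gamma(t),\psi\rangle)$. Since $\arccos$ is strictly decreasing on $[-1,1]$, this is equivalent to maximizing
\[
f(t) \;=\; \langle \gamma(t),\psi\rangle \;=\; \cos(t)\,\langle \bar{\psi},\psi\rangle \,+\, \sin(t)\,\langle \mathbf{u},\psi\rangle .
\]
Setting $f'(t)=0$ gives $\tan(t^*) = \langle \mathbf{u},\psi\rangle/\langle \bar{\psi},\psi\rangle = \langle \mathbf{v},\psi\rangle/\bigl(|\mathbf{v}|\,\langle \bar{\psi},\psi\rangle\bigr)$, and taking the principal branch of $\arctan$ yields exactly the expression appearing inside $\cos$ and $\sin$ in the theorem statement. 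Substituting $t^*$ into $\gamma$ gives the claimed closed form.

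The one remaining step is to check that the critical point is a maximum rather than a minimum of $f$; this is routine since $f''(t) = -f(t)$ and on the principal branch $t^* \in (-\pi/2,\pi/2)$, so $\cos(t^*) > 0$ and $f(t^*) = \sqrt{\langle \bar{\psi},\psi\rangle^2 + \langle \mathbf{u},\psi\rangle^2} > 0$. The competing root at $t^*+\pi$ corresponds to the antipode and maximizes distance. The only delicate point — and the main obstacle to worry about — is the well-definedness of $\arctan$: one needs $\langle \psi,\bar{\psi}\rangle \neq 0$, and to recover the correct branch one should verify that $\psi$ lies in the same open hemisphere as $\bar{\psi}$ (equivalently, away from the cut locus of $\bar{\psi}$), which is already the natural domain on which the Inverse Exponential map from the preliminaries is defined. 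Under this standard assumption the projection is unique and given by the stated formula.
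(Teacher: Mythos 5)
Your proof is correct, but it takes a genuinely different route from the paper. The paper argues via spherical trigonometry: it forms the spherical right triangle $\bigtriangleup \bar{\psi}\psi y_{\psi}$ with the right angle at the projected point, computes the angle $B$ at $\bar{\psi}$ using the inverse exponential map (so that $\cos B = \langle \mathbf{v},\psi\rangle/(|\mathbf{v}|\sin c)$ after using $\langle\mathbf{v},\bar{\psi}\rangle=0$), and then invokes the right-triangle identity $\tan a = \cos B\,\tan c$ together with $\cos c = \langle\psi,\bar{\psi}\rangle$ to obtain $a=\arctan(\cdot)$, finishing with the exponential map exactly as you do. You instead parametrize the great circle in the ambient $\mathbf{R}^{N+1}$ and solve the first-order condition for maximizing $\langle\gamma(t),\psi\rangle$, which is more elementary and self-contained: it avoids citing a spherical-trigonometric identity, it actually verifies (via the second-derivative check) that the critical point is the distance minimizer rather than assuming the right-angle characterization of the projection, and it makes explicit the branch/hemisphere condition $\langle\psi,\bar{\psi}\rangle>0$ that the paper leaves implicit (the paper only remarks elsewhere that data are restricted to a geodesic ball of radius $<\pi/2$). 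What the paper's triangle argument buys in exchange is that it transfers essentially verbatim to the hyperbolic case (its Theorem 2) by replacing circular with hyperbolic trigonometric identities, whereas your calculus argument would need to be redone with the Minkowski inner product $\langle\cdot,\cdot\rangle_H$ --- though that adaptation is also straightforward.
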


\begin{proof}
Consider the spherical triangle shown in Fig. \ref{sec2:fig1},
$\bigtriangleup \bar{\psi}\psi y_{\psi}$, where $y_{\psi} =
y(\mathbf{v}, \psi)$. Let, $a = d(\bar{\psi}, y_{\psi})$, $b =
d(y_{\psi}, \psi)$ and $c = d(\psi, \bar{\psi})$. Also, let $A =
\angle \bar{\psi}\psi y_{\psi}$, $B = \angle \psi\bar{\psi} y_{\psi}$,
$C = \angle \bar{\psi} y_{\psi} \psi$. Clearly, since $y_{\psi}$ is the
projected point , $C=\pi/2$.  So,
\begin{align*}
\cos B = \frac{<\frac{c}{\sin c}(\psi - \bar{\psi} \cos c), \mathbf{v}>}{c|\mathbf{v}|} \\
= \frac{\frac{<\mathbf{v}, \psi>}{\sin c} - \cot c <\mathbf{v}, \bar{\psi}>}{|\mathbf{v}|} \numberthis
\end{align*}
Here, $<.,.>$ denotes the Euclidean inner product, where both $\psi$
and $\mathbf{v}$ are viewed as points in $\mathbf{R}^{N+1}$, i.e., the
ambient space. Note that, $<\mathbf{v}, \bar{\psi}> = 0$, as
$\mathbf{v} \in T_{\bar{\psi}}\mathbf{S}^N$. From spherical
trigonometry, we know that $\tan a = \cos B \tan c$.
\begin{eqnarray*}
\therefore \cos B \tan c &=&  \frac{\frac{<\mathbf{v}, \psi>}{\cos c}}{|\mathbf{v}|} \\
&=& \frac{(<\mathbf{v}, \psi>)/(<\psi, \bar{\psi}>)}{|\mathbf{v}|}
\end{eqnarray*}
\begin{align}
\therefore a =\arctan \Bigg(\frac{(<\mathbf{v}, \psi>)/(<\psi, \bar{\psi}>)}{|\mathbf{v}|}\Bigg)
\end{align}
Hence, using the Exponential map, we can show that $y_{\psi}$ is given by,
\begin{equation}
\label{sec2:eq2}
y_{\psi} = \cos(a)\bar{\psi} + \sin(a)\mathbf{v}/|\mathbf{v}|
\end{equation}
\end{proof}

Analogously, we can derive the formula for $y(\mathbf{v}, \psi)$ on
$\mathbf{H}^N$, $\mathbf{v} \in T_{\bar{\psi}}\mathbf{H}^N$.

\begin{theorem}
Let $\psi \in \mathbf{H}^N$ and $\mathbf{v} \in
T_{\bar{\psi}}\mathbf{H}^N$. Then the projection of $\psi$ on the
geodesic submanifold defined by $\bar{\psi}$ and $\mathbf{v}$, i.e.,
$y(\mathbf{v}, \psi)$ is given by:
\begin{equation}
\label{sec2:eq3}
y_{\psi} = \cosh(a)\bar{\psi} + \sinh(a)\mathbf{v}/|\mathbf{v}|
\end{equation}
where, $$ a =\tanh^{-1} \Bigg(\frac{(<\mathbf{v}, \psi>_H)/(-<\psi,
  \bar{\psi}>_H)}{|\mathbf{v}|}\Bigg) $$.

\end{theorem}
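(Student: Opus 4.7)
The plan is to repeat the spherical argument in the previous theorem, replacing the spherical trigonometric identities with their hyperbolic counterparts and the Euclidean inner product with the Minkowski form $\langle\cdot,\cdot\rangle_H$. First I form the hyperbolic geodesic triangle with vertices $\bar\psi$, $\psi$ and $y_\psi$, with side lengths $a = d(\bar\psi,y_\psi)$, $b = d(y_\psi,\psi)$, $c = d(\bar\psi,\psi)$ and opposite angles $A$, $B$, $C$. Since $y_\psi$ is the closest point on the geodesic through $\bar\psi$ with direction $\mathbf{v}$, the foot-of-perpendicular property forces $C = \pi/2$.

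Next I compute $\cos B$. The angle $B$ at $\bar\psi$ is the angle between $\mathbf{v}$ and $Exp^{-1}_{\bar\psi}(\psi) = \tfrac{c}{\sinh c}(\psi - \bar\psi\cosh c)$, measured with the tangent-space inner product, which on the hyperboloid model is the restriction of $\langle\cdot,\cdot\rangle_H$. A direct expansion gives
\begin{equation*}
\cos B \;=\; \frac{\bigl\langle \tfrac{c}{\sinh c}(\psi - \bar\psi\cosh c),\,\mathbf{v}\bigr\rangle_H}{c\,|\mathbf{v}|}
\;=\; \frac{\langle\mathbf{v},\psi\rangle_H}{\sinh c\,\cdot|\mathbf{v}|},
\end{equation*}
where the term involving $\langle\bar\psi,\mathbf{v}\rangle_H$ drops out because $\mathbf{v}\in T_{\bar\psi}\mathbf{H}^N$ is Minkowski-orthogonal to the position vector $\bar\psi$.

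Then I invoke the right-triangle identity from hyperbolic trigonometry, $\tanh a = \cos B\,\tanh c$ (the analog of $\tan a = \cos B\,\tan c$ used in the spherical case). Substituting the expression for $\cos B$, writing $\tanh c = \sinh c/\cosh c$, and using $\cosh c = -\langle\psi,\bar\psi\rangle_H$ from the hyperbolic distance formula, everything with $\sinh c$ cancels and I obtain
\begin{equation*}
\tanh a \;=\; \frac{\langle\mathbf{v},\psi\rangle_H}{\cosh c\,\cdot|\mathbf{v}|}
\;=\; \frac{\langle\mathbf{v},\psi\rangle_H/(-\langle\psi,\bar\psi\rangle_H)}{|\mathbf{v}|},
\end{equation*}
which is exactly the claimed value of $a$ after taking $\tanh^{-1}$. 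Finally, since $y_\psi$ lies at arclength $a$ along the unit-speed geodesic from $\bar\psi$ in the direction $\mathbf{v}/|\mathbf{v}|$, applying the hyperbolic exponential map from Section~\ref{sec2} yields $y_\psi = \cosh(a)\,\bar\psi + \sinh(a)\,\mathbf{v}/|\mathbf{v}|$.

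The only nontrivial step is the hyperbolic right-triangle identity $\tanh a = \cos B\,\tanh c$; once this is accepted, the rest is algebraic bookkeeping with $\langle\cdot,\cdot\rangle_H$. The main things to be careful about are the sign in $\cosh c = -\langle\psi,\bar\psi\rangle_H$ and the orthogonality $\langle\bar\psi,\mathbf{v}\rangle_H = 0$, both of which are peculiar to the Minkowski setting and are what make the formula look slightly different from its spherical sibling.
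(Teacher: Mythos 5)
Your proof is correct and follows essentially the same route as the paper: form the hyperbolic right triangle $\bar\psi\,\psi\,y_\psi$, compute the angle at $\bar\psi$ via the inverse exponential map and the Minkowski form (using $\langle\bar\psi,\mathbf{v}\rangle_H=0$), apply the right-triangle identity $\tanh a=\cos B\,\tanh c$, and finish with the exponential map. The only cosmetic difference is that you correctly write $\cos B$ where the paper's displayed formula has $\cosh B$ (an apparent typo, since $B$ is an angle), so no substantive discrepancy remains.
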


\begin{proof}
As before, consider the hyperbolic triangle shown in, $\bigtriangleup
\bar{\psi}\psi y_{\psi}$, where $y_{\psi} = y(\mathbf{v}, \psi)$. Let,
$a = d(\bar{\psi}, y_{\psi})$, $b = d(y_{\psi}, \psi)$ and $c =
d(\psi, \bar{\psi})$. Also, let $A = \angle \bar{\psi}\psi y_{\psi}$,
$B = \angle \psi\bar{\psi} y_{\psi}$, $C = \angle \bar{\psi} y_{\psi}
\psi$. Clearly, since $y_{\psi}$ is the projected point , $C=\pi/2$.
Then, $B$ is the angle between $Log_{\bar{\psi}}(\psi)$ and
$\mathbf{v}$. Hence,
\begin{eqnarray}
\cosh B &=& \frac{\frac{<\mathbf{v}, \psi>_H}{\sinh c} - \coth c <\mathbf{v}, \bar{\psi}>_H}{|\mathbf{v}|}
\end{eqnarray}
Then, from hyperbolic trigonometry, as $\tanh a = \cosh B \tanh c$, we get 
\begin{align}
a =\tanh^{-1} \Bigg(\frac{(<\mathbf{v}, \psi>_H)/(-<\psi, \bar{\psi}>_H)}{|\mathbf{v}|}\Bigg) 
\end{align}

Note that, as the arc length between $\bar{\psi}$ and $y_{\psi}$ is
$a$, hence, using Exponential map, we can show that $y_{\psi} =
\cosh(a)\bar{\psi} + \sinh(a)\mathbf{v}/|\mathbf{v}|$.

\end{proof}

Given the closed form expression of the projected point, now we are in
a position to devise an efficient projection algorithm (for line $7$
in Alg. \ref{sec2:alg1}), which is given in Alg. \ref{sec2:alg2}. Note
that, using Alg. \ref{sec2:alg2}, data points on the current submanifold
are projected to a submanifold of dimension one less, which is needed
by the PGA algorithm in line 6. {\it Also note that, in order to
  ensure existence and uniqueness of FM, we have restricted the data
  points to be within a geodesic ball of convexity radius $< \pi/2$
  \cite{afsari2011riemannian}}.
\begin{algorithm}
  \caption{Algorithm for projecting the data points to a co-dimension one
    submanifold}\label{sec2:alg2}
  \begin{algorithmic}[1]
    \State \emph{Input}: a data point $x_i \in \mathbf{S}^N
    (\mathbf{H}^N)$, a geodesic submanifold defined at $\mu$ and
    $\mathbf{v} \in T_{\mu} \mathbf{S}^N (T_{\mu} \mathbf{H}^N)$, and
    $y(\mathbf{v}, x_i)$ which is the projection of $\psi$ on to the
    geodesic submanifold.  

\State \emph{Output}: $\bar{x}_i$ which is the projection of the data
point $x_i$ to a subspace, $\mathbf{S}^{N-1} (\mathbf{H}^{N-1})$,
which is orthogonal to the current geodesic submanifold.

\State Step 1. Evaluate the tangent vector, $\mathbf{v}_i \in
T_{y(\mathbf{v}, x_i)}\mathbf{S}^N (T_{y(\mathbf{v},
  x_i)}\mathbf{H}^N)$ directed towards $x_i$ using the Inverse
Exponential Map.  It is clear that $\mathbf{v}_i$ is orthogonal to
$\mathbf{v}$.

\State Step 2. Parallel transport $\mathbf{v}_i$ to
$\mu$. Let $\mathbf{v}_i^{\mu}$ denote the parallel transported
vector. The geodesic submanifold defined by $\mu$ and
$\mathbf{v}_i^{\mu}$ is orthogonal to geodesic submanifolds obtained from
the previous steps in Alg. \ref{sec2:alg1}.  

\State Step 3. Set $\bar{x}_i \leftarrow y(\mathbf{v}_i^{\mu}, x_i)$
  \end{algorithmic}
\end{algorithm}

Note that in order to descend to the codimension-1 submanifolds, we
use step-1 and step-2 instead of the optimization method used in the
exact PGA algorithm of \cite{sommer2010manifold}.

\section{Experimental Results}
\label{sec3}
In this section, we present experiments demonstrating the performance
of CCM-EPGA compared to PGA \cite{fletcher2004principal} and {\it
  exact PGA} \cite{sommer2010manifold}. We used the {\it average
  projection error}, defined in \cite{sommer2010manifold}, as a
measure of performance in our experiments. The {\it average projection
  error} is defined as follows. Let $\{x_i\}_{i=1}^n$ be data points
on a manifold $M$. Let $\mu$ be the mean of the data points. Let,
$\mathbf{v}$ be the first principal direction and
$S=Exp_{\mu}(\mathbf{v})$. Then the error $(E)$ is defined as follows:
\begin{equation}
E = \frac{1}{n}\sum_{i=1}^n d^2(x_i, \Pi_{S}(x_i))
\end{equation}
where $d(.,.)$ is the geodesic distance function on $M$.  We also
present the computation time for each of the three algorithms. All
the experimental results reported here were obtained on a desktop with
a single 3.33 GHz Intel-i7 CPU with 24 GB RAM.

\subsection{Comparative performance of CCM-EPGA on Synthetic data}
\label{exp1}

In this section, we present the comparitive performance of CCM-EPGA on
several synthetic datasets. For each of the synthetic data, we have
reported the {\it average projection error} and computation time for
all three PGA algorithms in Table \ref{tab1}.  All the four datasets
are on $S^2$ and the Fr\'{e}chet mean is at the "north pole". For all
the datasets, samples are in the northern hemisphere to ensure that
the Fr\'{e}chet mean is unique. Data\_1 and Data\_2 are generated by
taking samples along a geodesic with a slight perturbation. The last
two datasets are constructed by drawing random samples on the northern
hemisphere. In addition, data points from Data\_1 are depicted in
Fig. \ref{fig1}. The first principal direction is also shown (black
for CCM-EPGA, blue for PGA and red for {\it exact PGA}). Further, we
also report the data variance for these synthetic datasets. By
examining the results, it's evident that for data with low variance,
the significance of CCM-EPGA in terms of projection error is marginal,
while for high variance data, CCM-EPGA yields significantly better
accuracy. Also, CCM-EPGA is computationally very fast in comparison to
{\it exact PGA}. The results in Table \ref{tab1} indicate that
CCM-EPGA outperforms {\it exact PGA} in terms of efficiency and
accuracy. Although, the required time for PGA is less than that of
CCM-EPGA, in terms of accuracy, CCM-EPGA dominates PGA.

\begin{table*}
\begin{center}
\begin{tabular}{|l|c|c|c|c|c|c|c|}
\hline
Data & Var. & \multicolumn{2}{c|}{CCM-EPGA} & \multicolumn{2}{c|}{PGA} & \multicolumn{2}{c|}{\it exact PGA} \\ \hline
& &{\it avg. proj. err.} & Time(s) & {\it avg. proj. err.} & Time(s) & {\it avg. proj. err.} & Time(s)  \\
\hline\hline
Data\_1 & 2.16 & $\mathbf{1.13e-04}$ & 0.70 &0.174  & $\mathbf{0.46}$ & 2.54e-02 & 14853\\
Data\_2 & 0.95 & $\mathbf{5.87e-02}$ & 0.27 & 0.59 & $\mathbf{0.12}$ & 0.59 & 84.38\\
Data\_3 & 7.1e-03 & $\mathbf{2.33e-03}$ & 0.19 & 0.55 & $\mathbf{0.05}$ & 0.55 & 16.87\\
Data\_4 & 5.9e-02 & $\mathbf{0.27}$ & 0.33 & 0.37 & $\mathbf{0.14}$ & 0.37 & 71.84\\
\hline
\end{tabular}
\end{center}
\caption{Comparison results on synthetic datasets}
\label{tab1}
\end{table*}

\begin{figure}
        \centering
        \begin{subfigure}[b]{0.5\textwidth}
                \includegraphics[width=\textwidth]{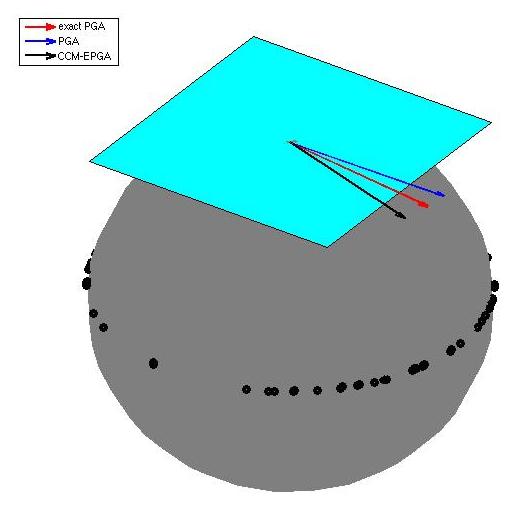}
        \end{subfigure}%
               \caption{Synthetic data (Data\_1) on $S^2$}\label{fig1}
\vspace*{-0.4em}
\end{figure}

\subsection{Comparative performance on point-set data ($\mathbf{S}^N$ example)}

In this section, we depict the performance of the proposed CCM-EPGA on
2D point-set data. The database is called GatorBait-100 dataset
\cite{anand}. This dataset consists of $100$ images of shapes of
different fishes.  From each of these images of size $20\times 200$,
we first extract boundary points, then we apply the Schr{\"o}dinger
distance transform \cite{deng2014riemannian} to map each of these
point sets on a hypersphere ($S^{3999}$). Hence, this data consists of
$100$ point-sets each of which lie on $S^{3999}$. As before, we have
used the {\it average projection error} \cite{sommer2010manifold}, to
measure the performance of algorithms in the
comparisons. Additionally, we report the computation time for each of
these PGA algorithms.  We used the code available online for {\it
  exact PGA} \cite{stefan}. This online implementation is not scalable
to large (even moderate) number of data points, and furhter requires
the computation of the Hessian matrix in the optimization step, which
is computationally expensive. Hence, for this real data application on
the high dimensional hypersphere, we could not report the results for
the {\it exact PGA} algorithm. Though one can use a Sparse matrix
version of the exact PGA code, along with efficient parallelization to
make the {\it exact PGA} algorithm suitable for moderately large data,
we would like to point out that since our algorithm does not need such
modifications, it clearly gives CCM-EPGA an advantage over {\it exact
  PGA} from a computational efficiency perspective. In terms of
accuracy, it can be clearly seen that CCM-EPGA outperforms {\it exact
  PGA} from the results on synthetic datasets. Both {\it average
  projection error} and computational time on GatorBait-100 dataset
are reported in Table \ref{tab2}. This result demonstrates accuracy of
CCM-EPGA over the PGA algorithm with a marginal sacrifice in efficiency
but significant gains in accuracy.

\begin{table}
\begin{center}
\begin{tabular}{|l|c|c|}
\hline
Method & {\it avg. proj. error} & Time(s) \\
\hline\hline
CCM-EPGA & $\mathbf{2.83e-10}$ & $0.40$\\
PGA & $9.68e-02$ & $\mathbf{0.28}$ \\
\hline
\end{tabular}
\end{center}
\caption{Comparison results on \emph{Gator-Bait} database}
\label{tab2}
\end{table}
\subsection{PGA on population of gaussian distributions ($\mathbf{H}^N$ example)}

In this section, we propose a novel scheme to compute principal
geodesic submanifolds for the manifold of gaussian densities. Here, we
use concepts from information geometry presented in
\cite{amari2007methods}, specifically, the Fisher information matrix
\cite{lehmann1998theory} to define a metric on this manifold
\cite{costa2014fisher}. Consider a normal density
$f(.|\boldsymbol\theta)$ in an $n-$dimensional space, with parameters
represented by $\boldsymbol \theta$. Then the ${ij}^{th}$ entry of the
$n\times n$ Fisher matrix, denoted by $g_{ij}$, is defined as follows:
\begin{equation}
g_{ij}(\boldsymbol\theta) = \int_{\mathbf{R}} f(x| \boldsymbol\theta) \frac{\partial ln f(x| \boldsymbol\theta)}{\partial \theta_i} \frac{\partial ln f(x| \boldsymbol\theta)}{\partial \theta_j} dx
\end{equation}
For example, for a univariate normal density $f(.|\mu, \sigma)$, the
fisher information matrix is
\begin{equation}
(g_{ij}(\mu,\sigma)) =\begin{pmatrix}
  \frac{1}{\sigma^2} & 0\\
  0 & \frac{2}{\sigma^2}
 \end{pmatrix}
\end{equation}
So, the metric is defined as follows:
\begin{equation}
<u, v> = u^tGv
\end{equation}
where $G=(g_{ij})$ is the Fisher information matrix. Now, consider the
parameter space for the univariate normal distributions. The parameter
space is $H_F = {(\mu, \sigma) \in \mathbf{R}^2| \sigma > 0}$, i.e.,
positive half space, which is the Hyperbolic space, modeled by the
Poincar{\'e} half-plane, denoted by $\mathbf{P}^2$. We can define a
bijection $F_1: H_F \rightarrow \mathbf{P}^2$ as $F(\mu, \sigma) =
(\frac{\mu}{\sqrt{2}}, \sigma)$. Hence, the univariate normal
distributions can be parametrized by the $2-$dimensional hyperbolic
space. Moreover, there exists a diffeomorphsim between $\mathbf{P}^2$
and $\mathbf{H}^2$ (the mapping is analogous to stereographic
projection for $\mathbf{S}^N$), thus, we can readily use the
formulation in Section \ref{sec2} to compute principal geodesic
submanifold on the manifold of univariate normal distributions.

Motivated by the above formulation, we ask the following question: {\it
  Does there exist a similar relation for multivariate normal
  distributions?} The answer is no in general. But if the multivariate
distributions have diagonal covariance matrix, (i.e., independent
uncorrelated variables in the mutivariate case), the above relation
between $\mathbf{P}^2$ and $\mathbf{H}^2$ can be generalized. Consider
an $N-$dimensional normal distribution parametrized by
$(\boldsymbol\mu, \Sigma)$ where $\boldsymbol\mu = (\mu_1, \cdots,
\mu_N)^t$ and $\Sigma$ is a diagonal positive definite matrix (i.e.,
$\Sigma_{ij} = \sigma_i$, if $i=j$, else $\Sigma_{ij} = 0$). Then,
analogous to the univariate normal distribution case, we can define a
bijection $F_N: H^N_F \rightarrow \mathbf{P}^{2N}$ as follows:
\begin{equation}
F_N(\boldsymbol\mu, \Sigma) = (\frac{\mu_1}{\sqrt{2}}, \sigma_1, \cdots, \frac{\mu_N}{\sqrt{2}}, \sigma_N)
\end{equation}
Hence, we can use our formulation in Section \ref{sec2} since there is
a diffeomorphism between $\mathbf{P}^{2N}$ and $\mathbf{H}^{2N}$. But,
for general non-diagonal $N-$dimensional covariance matrix space,
$SPD(N)$, the above formulation does not hold. This motivated us to go
one step further to search for a parametrization of $SPD(N)$ where we
can use the above formulation. In \cite{jian2007metric}, authors have
used the Iwasawa coordinates to parametrize $SPD(N)$. Using the
Iwasawa coordinates \cite{terras1985harmonic}, we can get a one-to-one
mapping between $SPD(N)$ and the product manifold of $PD(N)$ and
$U(N-1)$, where $PD(N)$ is manifold of $N-$dimensional diagonal
positive definite matrix and $U(N-1)$ is the space of
$(N-1)-$dimensional upper triangular matrices, which is isomorphic to
$R^{N(N+1)/2}$. We have used the formulation in \cite{terras1985harmonic}, 
as discussed below.

Let $Y = V_N \in SPD(N)$, then we can use Iwasawa decomposition to
represent $V_N$ as a tuple $(V_{N-1}, x_{N-1}, w_{N-1})$. And
repeating the following partial Iwasawa decomposition:
\begin{equation}
V_N = \begin{pmatrix}I & x_{N-1} \\0 & 1 \end{pmatrix}^T \begin{pmatrix}V_{N-1} & 0\\ 0 & w_{N-1}\end{pmatrix}
\end{equation}
where $w_{N-1} > 0$ and $x_{N-1} \in \mathbf{R}^{N-1}$. We get the
following vectorized expression: $V_{N} \mapsto (((w_0, x_1^t, w_1),
x_2^t, w_2), \cdots, x_{N-1}^t, w_{N-1})$. Note that as each of $w_i$
is $>0$, we can construct a positive definite diagonal matrix with
$i^{th}$ diagonal entry being $w_i$. And as each $x_i$ is in
$\mathbf{R}^i$, we will arrange them columnwise to form a upper
triangular matrix.  Thus, for $PD(N)$, we can use our formulation for
the hyperboloid model of the hyperbolic space given in Section
\ref{sec2}, and the standard PCA can be applied for $R^{N(N+1)/2}$.

We now use the above formulation to compute the principal geodesic
submanifolds for a covariance descriptor representation of Brodatz
texture dataset \cite{brodatz1966textures}. Similar to our previous
experiment on point-set data, in this experiment, we report the {\it
  average projection error} and computation time. We adopt a similar
procedure as in \cite{ho2013nonlinear} to derive the covariance
descriptors for the texture images in the Brodatz database. Each
$256\times 256$ texture image is first partitioned into $64$
non-overlapping $8\times 8$ blocks. Then, for each block, the
covariance matrix ($FF^T$) is summed over the blocks. Here, the
covariance matrix is computed from the feature vector $F = \bigg(I,
|\frac{\partial I}{\partial x}|, |\frac{\partial I}{\partial y}|,
|\frac{\partial^2 I}{\partial x^2}|, |\frac{\partial^2 I}{\partial
  y^2}|\bigg)^t$. We make the covariance matrix positive definite by
adding a small positive diagonal matrix. Then, each image is
represented as a normal distribution with zero mean and this computed
covariance matrix. Then, we used the above formulation to map each
normal distribution on to $\mathbf{H}^{10}$. The comparative results
of CCM-EPGA with PGA and {\it exact PGA} are presented in Table
\ref{tab3}. The results clearly indicate efficiency and accuracy of
CCM-EPGA compared to PGA and exact PGA algorithms.

\begin{table}
\begin{center}
\begin{tabular}{|l|c|c|}
\hline
Method & {\it avg. proj. error} & Time(s) \\
\hline\hline
CCM-EPGA & $\mathbf{7.73e-03}$ & $0.09$\\
PGA & $0.14$ & $\mathbf{0.05}$ \\
{\it exact PGA} & $0.09$ & $732$ \\
\hline
\end{tabular}
\end{center}
\caption{Comparison results on \emph{Brodatz} database}
\label{tab3}
\end{table}

\section{Data Reconstruction from principal directions and coefficients}
\label{sec4}
In this section, we discuss a recursive scheme to approximate an
original data point with principal directions and coefficients. We
discuss the reconstruction for data on $\mathbf{S}^N$, the
reconstruction for data points on $\mathbf{H}^N$ can be done in an
analogous manner. Let $x_j \in \mathbf{S}^N$ be the $j^{th}$ data
point and $\mathbf{v}_k$ the $k^{th}$ principal vector. $\bar{x}_j^k$
is the $k^{th}$ principal component of $x_j$. Note that on
$\mathbf{S}^N$, $k^{th}$ principal component of a data point is
$y(\mathbf{v}_k, x_j)$. Let $x_j^k$ be the approximated $x_j$ from the first $k$ principal components. Let $\mathbf{w}_{j}^{k-1}$ be
$Log_{\mu}{x_j^{k-1}}/\|Log_{\mu}{x_j^{k-1}}\|$. Let
$\bar{\mathbf{w}}_{j}^{k-1}$ be the parallel transported vector
$\mathbf{w}_{j}^{k-1}$ from $\mu$ to $\bar{x}_j^k$. Let,
$\bar{\mathbf{v}_k}$ be the parallel transported version of $\mathbf{v}_k$
to $x_j^{k-1}$. We refer readers to Fig. \ref{fig311} for a geometric
interpretation.

\begin{figure}
        \centering
        \begin{subfigure}[b]{0.4\textwidth}
                \includegraphics[width=\textwidth]{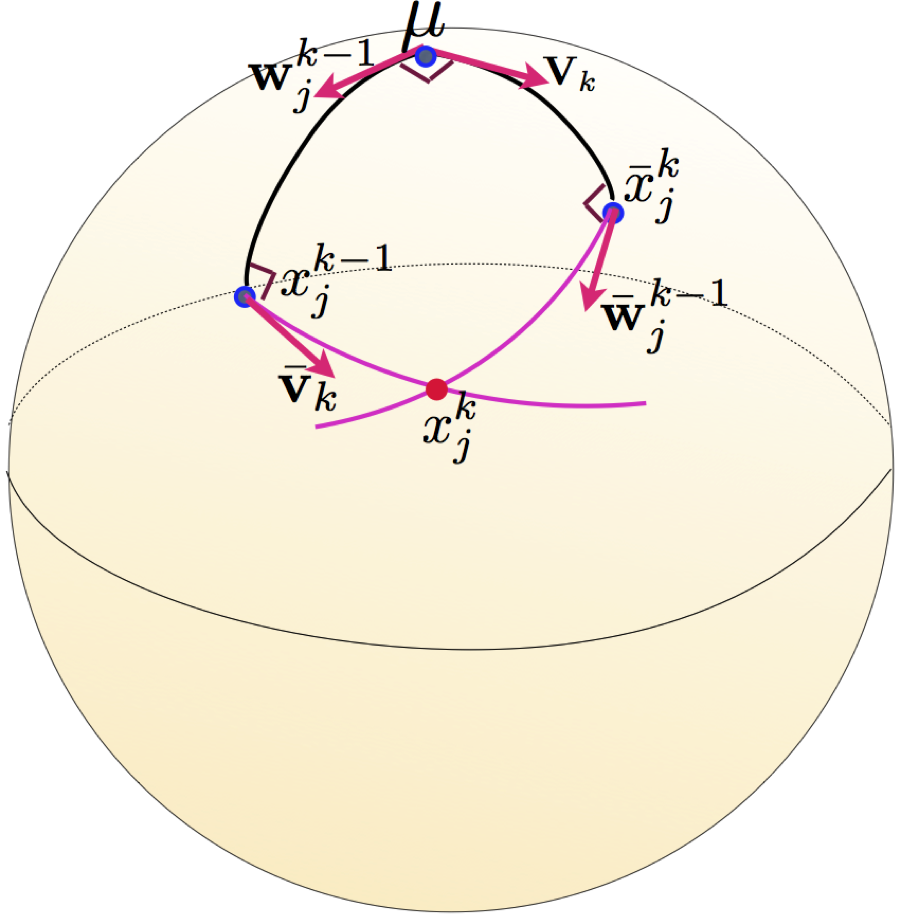}
        \end{subfigure}
\caption{Approximation of $x_j$ from the first $k$ principal components.}\label{fig311}
\end{figure}

Now, we will formulate a recursive scheme to reconstruct $x_j$. Let us
reconstruct the data using the first $(k-1)$ principal
components. Then, the $k^{th}$ approximated point $x_j^k$ is the
intersection of two geodesics defined by $x_j^{k-1}$,
$\bar{\mathbf{v}_k}$ and $\bar{x}_j^k$,
$\bar{\mathbf{w}}_{j}^{k-1}$. Let these two great circles be denoted
by
\begin{eqnarray}
G_1(t) = \cos(t)x_j^{k-1} + \sin(t)\bar{\mathbf{v}_k} \\
G_2(u) = \cos(u)\bar{x}_j^k + \sin(u)\bar{\mathbf{w}}_{j}^{k-1} 
\end{eqnarray}

At $t=\alpha_1$ and $u=\alpha_2$, let $G_1(\alpha_1) = G_2(\alpha_2) =
x_j^k$. Since, $\bar{\mathbf{v}_k}$ and $\bar{\mathbf{w}}_{j}^{k-1}$ are
mutually orthogonal, we get,
\begin{equation}
\tan(\alpha_1)\tan(\alpha_2) = <x_j^{k-1}, \bar{\mathbf{w}}_{j}^{k-1}><\bar{x}_j^k, \bar{\mathbf{v}_k}>
\label{eq211}
\end{equation}
Note that, as our goal is to solve for $\alpha_1$ or $\alpha_2$ to get
$x_j^k$, we need two equations. The second equation can be derived as
follows:
$$
d(\mu, G_1(\alpha_1)) = d(\mu, G_2(\alpha_2))
$$
This leads to,
\begin{equation}
\cos(\alpha_2) = \frac{\cos(\alpha_1)<\mu, x_j^{k-1}>}{<\mu, \bar{x}_j^k>}
\label{eq212}
\end{equation}
Then, by solving Eqs. \eqref{eq211} and \eqref{eq212} we get,
\begin{equation}
a \cos^4(\alpha_1) + b\cos^2(\alpha_1) + d =0
\label{eq213}
\end{equation}
where,
\begin{align*}
a = <\mu, x_j^{k-1}>^2<x_j^{k-1}, \bar{\mathbf{w}}_{j}^{k-1}>^2<\bar{x}_j^k, \bar{\mathbf{v}_k}>^2 \\
- <\mu, x_j^{k-1}>^2
\end{align*}
$$
b = <\mu, \bar{x}_j^k>^2 + <\mu, x_j^{k-1}>^2
$$
and
$$
d = -<\mu, \bar{x}_j^k>^2
$$
By solving the equation \eqref{eq213}, we get
\begin{equation}
\alpha_1 = \arccos \bigg(\sqrt{\frac{-b+\sqrt{(b^2-4ad)}sgn(a)}{2a}}\bigg)
\end{equation}
where, $sgn(.)$ is the signum function. Hence, $x_j^k =
G_1(\alpha_1)$.  This completes the reconstruction algorithm. Our
future efforts will be focused on using this reconstruction algorithm
in a variety of applications mentioned earlier.

\section{Conclusions}
\label{sec5}
In this paper, we presented an efficient and accurate exact-PGA
algorithm for (non-zero) constant curvature manifolds, namely the
hypersphere $S^n$ and the hyperbolic space $H^n$. We presented
analytic expressions for the shortest distance between the data point
and the geodesic submanifolds, which is required in the PGA algorithm
and in general involves solving a difficult optimization
problem. Using these analytic expressions, we achieve a much more
accurate and efficient solution for PGA on constant curvature
manifolds, that are frequently encountered in Computer Vision, Medical
Imaging and Machine Learning tasks. We presented comparison results
on synthetic and real data sets demonstrating favorable performance of
our algorithm in comparison to the state-of-the-art.

{\small
\bibliographystyle{ieee}
\bibliography{egbib}
}
\end{document}